\definecolor{myblue}{RGB}{0, 78, 158}
\newtheorem{proposition}{Proposition}
\newcommand{\partitle}[1]{\noindent{\textbf{#1.}}}
\newcommand{\bsl}[1]{\textbf{#1:}}
\newcommand{\interval}{interfering interval}
\newcommand{\intervals}{interfering intervals}
\title{\LARGE \bf
Reliable and Efficient Multi-Agent Coordination via Graph Neural Network Variational Autoencoders
}
\author{Yue Meng$^{1}$, Nathalie Majcherczyk$^{2}$, Wenliang Liu$^{2}$, Scott Kiesel$^{2}$, Chuchu Fan$^{1}$ and Federico Pecora$^{2}$
\thanks{*This research was done during Yue's internship at Amazon. Project page: \url{https://mengyuest.github.io/gnn-vae-coord/}}
\thanks{$^{1}$Yue Meng and Chuchu Fan are with the Massachusetts Institute
of Technology, Cambridge, MA 02139 USA. Email: {\tt\small \{mengyue,chuchu\}@mit.edu}}%
\thanks{$^{2}$Authors are with Amazon Robotics, North Reading, MA USA. Email: {\tt\small 
\{majcherc,liuwll,skkiesel,fpecora\}@amazon.com}}%
}
\begin{document}

\maketitle
\thispagestyle{empty}
\pagestyle{empty}

\begin{abstract}

Multi-agent coordination is crucial for reliable multi-robot navigation in shared spaces such as automated warehouses. In regions of dense robot traffic, local coordination methods may fail to find a deadlock-free solution. In these scenarios, it is appropriate to let a central unit generate a global schedule that decides the passing order of robots.
However, the runtime of such centralized coordination methods increases significantly with the problem scale. 
In this paper, we propose to leverage Graph Neural Network Variational Autoencoders (GNN-VAE) to solve the multi-agent coordination problem at scale faster than through centralized optimization. 
We formulate the coordination problem as a graph problem and collect ground truth data using a Mixed-Integer Linear Program (MILP) solver.
During training, our learning framework encodes good quality solutions of the graph problem into a latent space. 
At inference time, solution samples are decoded from the sampled latent variables, and the lowest-cost sample is selected for coordination.
By construction, our GNN-VAE framework returns solutions that always respect the constraints of the considered coordination problem. Numerical results show that our approach trained on small-scale problems can achieve high-quality solutions even for large-scale problems with 250 robots, being much faster than other baselines.

\end{abstract}
\section{INTRODUCTION}



Multi-agent coordination is essential to ensure that a fleet of robots can navigate shared spaces, such as warehouse floors~\cite{wurman2008coordinating} and public roads~\cite{adler2002cooperative}. Effective coordination avoids collisions, reduces delays, and optimizes resource usage.
Coordination between robots can either be achieved \textit{implicitly} by each robot acting to avoid conflicts based on its local information, or \textit{explicitly} via distributed or centralized decision-making. The former category of methods implies a pre-determined mutual understanding between robots (e.g. a set of rules or reciprocal policies). Their myopic nature is ill-suited for solving complex scenarios with many agents. The latter methods can plan ahead to optimize fleet operation, allowing robots to achieve common goals safely and efficiently in challenging settings. 



However, existing explicit coordination methods face a fundamental trade-off between optimality and computational tractability, particularly as the number of robots increases or the task objectives become more complex. While heuristic-based methods~\cite{gere1966heuristics} and sampling-based methods~\cite{omidvar2010comparative} are fast in computation, they often struggle to provide high-quality solutions for large graphs and require carefully crafted designs tailored to specific objectives. On the other hand, exact methods such as optimization-based approaches~\cite{kyriakidis2012milp} and search-based methods~\cite{li2021eecbs} can deliver better quality results, but their exponential complexity makes them impractical for large-scale problems.





In light of the recent advances in deep generative models~\cite{kipf2016variational}, we leverage Graph Neural Networks (GNN) and Variational Autoencoders (VAE) to learn the distribution of the high-quality solutions for multi-agent coordination problems. This approach offers several key advantages: \textit{(i)} GNN are well-suited for embedding the inherent graphical structure of multi-agent coordination problems, enabling them to capture complex interactions among robots. \textit{(ii)} VAE incorporate uncertainties in the problem, opening the possibility to generate multiple candidate solutions. \textit{(iii)} Neural Networks are efficient in evaluation, leveraging GPU parallel computation for faster performance, and \textit{(iv)} deep generative models based on graphs can generalize effectively to larger-scale problems. 

In this paper, we propose a GNN-VAE based framework to achieve reliable and efficient multi-agent coordination. Framing the multi-agent coordination problem as an optimization problem on a graph, we collect optimal solutions using a Mixed-Integer Linear Program (MILP) solver. During training, the GNN-VAE encodes these solutions into a latent space. At the inference stage, latent embeddings are sampled from the latent space and are further decoded to the solution samples, with the solution sample having the lowest cost selected for the coordination problem. Rather than predicting pure solution labels, GNN-VAE learns node ranks and edge modes in a semi-supervised manner to construct the solution, ensuring the prediction satisfies formal constraints of the coordination problem.


Our contributions can be summarized as follows: 
\begin{enumerate}
    \item We propose a novel learning framework that utilizes GNN-VAEs to tackle the particular application of multi-agent navigation in shared space, leveraging the generative nature of the model to sample from the set of feasible problem solutions.
    \item We propose a two-branch learning framework that guarantees, by construction, the satisfaction of two types of constraints of the coordination problem when inferring solutions.
    \item We perform an extensive evaluation of our approach, benchmarking it against strong baselines for problems involving up to 250 robots.
\end{enumerate}


\section{RELATED WORK}
\label{sec:related}


This paper considers centralized, explicit coordination problems, which belong to resource-constrained project scheduling problems (RCPSP)~\cite{pritsker1969multiproject} known to be NP-hard~\cite{blazewicz1983scheduling}. Related work can be divided into heuristic-based methods~\cite{de2007heuristic},  optimization-based methods~\cite{kyriakidis2012milp}, search-based methods~\cite{li2021eecbs} and sampling-based methods~\cite{xue2022multi}. An extensive comparison in ~\cite{rudy2022multiple} shows that meta-heuristic methods such as Tabu search outperforms other algorithms, and optimization-based methods work well on small-scale problems. Our approach does not require handcrafted heuristics designs, nor does it require time-consuming search or optimization processes. Instead, our method is akin to the sampling-based methods as it learns the underlying solution distribution from the demonstrated data, enabling it to scale and generalize to large-scale unseen scenarios.

Recent advances in neural networks have introduced data-driven approaches for multi-agent systems~\cite{garg2024learning}. Graph Neural Networks (GNN)~\cite{kipf2016semi,zhou2020graph,brody2021attentive} show a significant advantage in representing complex interactions between robots and generalize well to new scenarios~\cite{li2022online,yu2023learning,zhang2024gcbf+}. Deep generative models such as Variational Autoencoders (VAE)~\cite{kingma2013auto}, Generative Adversarial Networks (GAN)~\cite{goodfellow2020generative} and Diffusion models~\cite{ho2020denoising} have shown great success in learning from demonstrated data~\cite{song2018multi,ivanovic2019trajectron,jiang2023motiondiffuser}. Inspired by these contributions, we propose to utilize Graph Variational Autoencoders~\cite{kipf2016variational} to learn solution distribution for explicit coordination problems. The closest paper to ours is~\cite{wang2020learning}, which uses GNN to solve multi-robot coordination tasks with two to five robots. We consider tasks with diverse dependencies among robots with density constraints, and with the novel structural design, our method is guaranteed to generate feasible solutions and can scale up to 250 robots.


\section{PRELIMINARIES}
\label{sec:prelim}
\partitle{Robot configurations and paths}\footnote{We mainly follow the coordination graph formulation in~\cite{mannucci2021provably} and~\cite{rudy2022multiple}.}
Consider $N$ robots navigating in a shared 2D environment $\mathcal{W}\subseteq\mathbb{R}^2$ with static obstacles $\mathcal{O}\subseteq \mathcal{W}$. The $i$-th robot's configuration space is $Q_i\subseteq \text{SE}(2)=\mathbb{R}^2 \times \mathbb{S}^1$ where a configuration consists of the 2D position and heading angle. We define obstacle-free configurations for the $i$-th robot as $Q_i^{\text{free}}=\{q_i\in Q_i: R_i(q_i) \cap \mathcal{O}=\phi\}$ with $R_i:Q_i\to 2^{\mathbb{R}^2}$ indicating the robot's occupancy in the environment. Given start, goal configurations $q_i^s, q_i^g\in Q_i^{free}$, a path is a function $p_i: [0,1]\to Q_i^{free}$ that satisfies $p_i(0)=q_i^s$,  $p_i(1)=q_i^g$, and other kinematic constraints. 




\begin{figure*}[!htbp]
    \centering
    \includegraphics[width=0.85\textwidth]{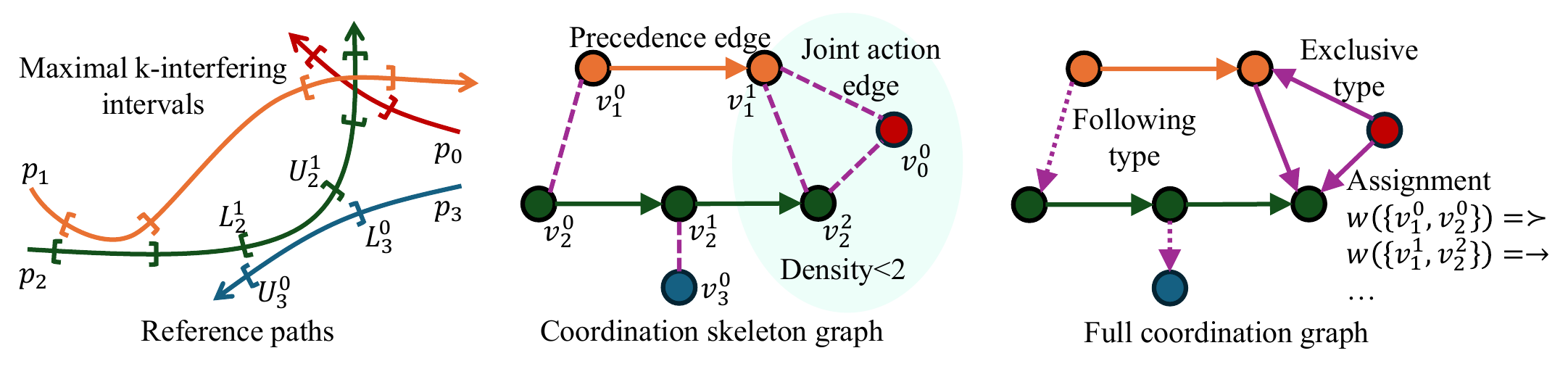}
    \caption{Illustration for the coordination graph.}
    \label{fig:graph-illustration}
\end{figure*}

\partitle{Interfering intervals}
Given an interference relation $\xi:Q_i^{free} \times Q_j^{free}\to [0,1]$, with 1 indicating the collision with two robots and 0 being collision-free, a pair of \textit{k-interfering intervals} $([l_i, u_i], [l_j, u_j]) \subseteq ([0,1]\times [0,1])$ for paths $p_i, p_j$ is defined as (here $0<k<1$):
\begin{equation}
\begin{aligned}
    &\left\{\forall \sigma_i\in[l_i,u_i], \exists \sigma_j\in[l_j,u_j], \text{ s.t. } \xi(p_i(\sigma_i), p_j(\sigma_j)) \geq k \right\}\land \\
    & \left\{\forall \sigma_j\in[l_j,u_j], \exists \sigma_i\in[l_i,u_i], \text{ s.t. } \xi(p_i(\sigma_i), p_j(\sigma_j)) \geq k\right\}
\end{aligned}
\end{equation}
and we will refer to this pair of intervals as an \textit{interfering section} between two robots. 
These interfering intervals are \textit{maximal k-interfering intervals} if they cannot be further expanded while satisfying the conditions above. For brevity, we will refer to them as \intervals{} for the rest of the paper.
For a pair of paths $p_i, p_j$, denote the set of all \intervals{} as $\Xi(p_i, p_j, k)$. 

\partitle{Coordination graphs} Given all the \intervals{} for the $N$ robots $\cup_{i=1}^{N-1} \cup_{j=i+1}^N \Xi({p}_i, {p}_j, k)$, we construct a mixed graph $G=(V, P, A)$ with the vertices set $V$, the directed edge set $P$ and the undirected edge set $A$. The node $v_i^p\in V$ is associated with the  $p$\textsuperscript{th} merged \interval{} \footnote{If \intervals{} from different pairs overlap on a robot path, we merge them but keep the intervals' entering/exit time per interfering pair.} for the $i$\textsuperscript{th} robot. $P$ denotes all the ``precedence" edges: $(v_i^p,v_i^{p+1})$ and $A$ denotes all the unordered ``joint action" edges $\{v_i^p, v_j^q\}$ for each pair of \intervals{} $([l_i^p, u_i^p], [l_j^q, u_j^q])$ with $i\neq j$. The graph $G$ is called  \textit{coordination skeleton graph}. A \textit{(full) coordination graph} is a coordination skeleton graph with each joint action edge assigned to a value that decides the passing order and the passing pattern for the robots at the interfering section. For each pair $\{v_i^p, v_j^q\}\in A$, the joint action values are $\mathcal{D}=\{\rightarrow, \leftarrow, \succ, \prec \}$, where: 
\begin{itemize}
    \item \textit{Exclusive}: $v_i^p\rightarrow v_j^q$ imposes the $j$-th robot must wait to navigate beyond $l_j^q$ until the $i$-th robot has reached $u_i^p$.
    \item \textit{Following}: $v_i^p\succ v_j^q$ imposes the $j$-th robot must wait to navigate beyond $l_j^q$ until the $i$-th robot has reached $l_i^p$.
\end{itemize}

\partitle{Problem constraints} Denote $w: A \to \mathcal{D}$ the function to assign joint action edges with values, and the \textit{assignment} for a graph $w_G=\{w(\{v_i^p,v_j^q\}) \,|\, \{v_i^p, v_j^q\}\in A\}$. We have the constraints: (1) The directed graph $G_{w_G}$ induced by the skeleton graph $G$ and the assignment $w_G$ is acyclic (no cycles in the graph), and (2) the number of ``following"-type edges is restricted. The former (acyclic constraint) is to avoid deadlocks caused by a ``circular waiting" among the robots, and the latter (density constraint) limits the maximum number of robots allowed to pass the interfering section simultaneously.
The density constraint is enforced on maximal cliques~\footnote{A clique is a subset of vertices where every pair is connected by an edge, while a maximal clique is a clique that cannot be extended by including any additional adjacent vertex.} on the subgraph $G'=(V, A)$: for each maximal clique $K\in\mathcal{K}(G')$ with a density constraint $\rho_{K}$, the number of ``following"-type edges should be no more than $h_K=\frac{(\rho_{K}+1)\rho_{K}}{2}-1$.\footnote{The minimum number of 'following' edges needed for $k$ robots to pass through an interfering region at once is $\frac{k(k-1)}{2}$. Hence, the maximum allowed for density $\rho_K$ is $\frac{(\rho_{K}+1)\rho_{K}}{2}-1$.} An illustration is shown in Fig.~\ref{fig:graph-illustration}.

\partitle{Travel time under assignment} We consider the updated robots' travel time as a main objective to measure the assignment quality.
Each \interval{} $[l_i^p, u_i^p]$ is associated with an \textit{expected travel time interval} $[L_i^p, U_i^p]$ indicating the scheduled time for the i\textsuperscript{th} robot to enter $l_i^p$ and to exit $u_i^p$ if there is no interference. Given an assignment $w_G$, the \textit{updated travel time intervals} $[\tilde{L}_i^p, \tilde{U}_i^p], [\tilde{L}_j^q, \tilde{U}_j^q]$ for robots $i, j$ at the interfering section should satisfy:
\begin{equation}
\begin{aligned}
&\tilde{L}_i^p\geq \tilde{U}_j^q, \text{if } w(\{v_i^p,v_j^q\})=\leftarrow; 
\tilde{L}_j^q\geq \tilde{U}_i^p, \text{if } w(\{v_i^p,v_j^q\})=\rightarrow\\
&\tilde{L}_i^p\geq \tilde{L}_j^q, \text{if } w(\{v_i^p,v_j^q\})=\prec;
\tilde{L}_j^q\geq \tilde{L}_i^p, \text{if } w(\{v_i^p,v_j^q\})=\succ.
\end{aligned}
\label{eq:delay-constraints-1}
\end{equation}

Globally, the updated travel time should satisfy monotone-increasing delay constraint along the path. 
Denote all the distinct expected time for the robot $i$ to enter and exit its \intervals{} sorted as $0\leq T^{(1)}_i\leq T^{(2)}_i \leq ...\leq T^{(C_i)}_i$. The updated travel time $\tilde{T}_i^{(1)}, \tilde{T}_i^{(2)},...,\tilde{T}_i^{(C_i)}$ should satisfy:
\begin{equation}
    0\leq \tilde{T}^{(1)}_i -T^{(1)}_i\leq \tilde{T}^{(2)}_i-T^{(2)}_i\leq ... \leq \tilde{T}^{(C_i)}_i-T^{(C_i)}_i.
    \label{eq:delay-constraints-2}
\end{equation}

The minimum values to satisfy the constraints in \eqref{eq:delay-constraints-1} and \eqref{eq:delay-constraints-2} for all the robots form the updated travel time. Here $\tilde{T}^{(C_i)}_i$ denotes the updated task finishing time of a robot, and $D^{(l)}_i=\tilde{T}^{(l)}_{i}-T^{(l)}_{i}$ denotes the delay at the $l$\textsuperscript{th} stage.

\section{Problem Formulation}
Given a coordination skeleton graph $G=(V,P,A)$, a set of density constraints $\mathcal{K}(G')$ on its subgraph $G'=(V,A)$, and an expected travel time $\{\{T^{(p)}_i\}_{p=1}^{C_i}\}_{i=1}^N$ for a multi-agent coordination problem defined in Sec.~\ref{sec:prelim}, we aim to find the optimal assignment $w_G^*\in\mathcal{W}$ that minimizes the cost function $f_{G,T}:\mathcal{W}\to \mathbb{R}$ while satisfying the acyclic and density constraints:
\begin{equation}
    \begin{aligned}
    \mathop{\text{Min}}\limits_{w_G\in \mathcal{W}} \quad & f_{G,T}(w_G) \\
    \text{s.t.} \quad & G_{w_G} \text{ is a directed acyclic graph (DAG).} \\
   \quad &\sum\limits_{v, v'\in K, v\neq v'} \mathds{1}_{(w(\{v,v'\})\in\{\succ, \prec\})} \leq h_K, \forall K \in \mathcal{K}(G') 
\end{aligned}
\label{eq:formulation}
\end{equation}
where $\mathds{1}_{(w(\{v,v'\})\in\{\succ, \prec\})}$ is 1 if the assignment for the edge $\{v, v'\}$ is ``following"-type, otherwise 0. Different cost functions will be explained in the following section.
\begin{figure*}[!htbp]
    \centering
    \includegraphics[width=0.9\textwidth]{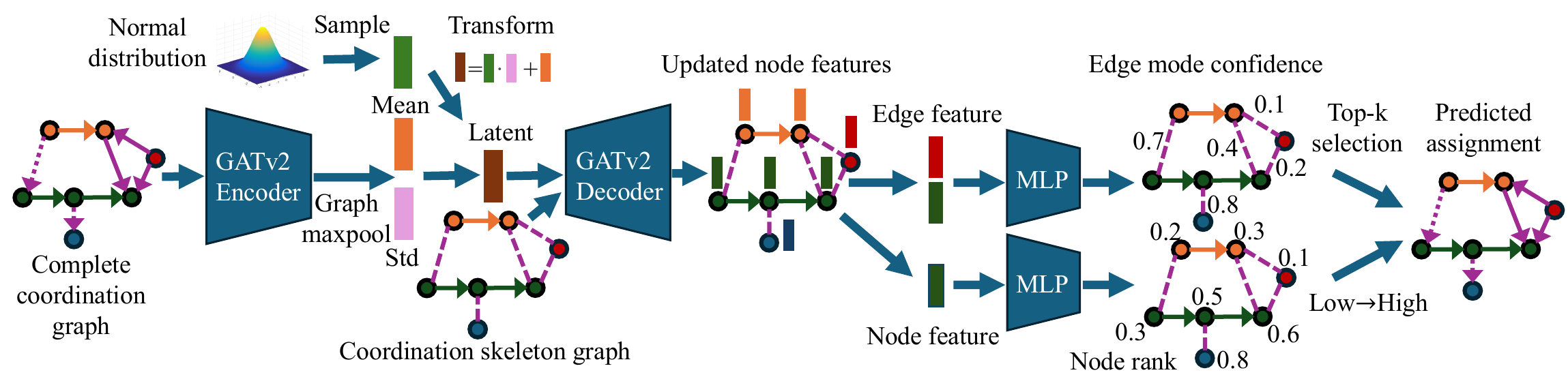}
    \caption{Learning framework: The GNN-VAE first encodes the assignment via graph convolution and graph max pooling to a latent embedding. The sampled latent code is sent to the decoder and the two-branch MLP to get the predicted assignment.}
    \label{fig:arch}
\end{figure*}

\section{TECHNICAL APPROACH}
\label{sec:tech}
\subsection{MILP formulation for assignment optimization}
Given the problem defined in Eq.~\eqref{eq:formulation}, we can find the optimal solution considering the following MILP. 
For every joint action edge $\{v_i^p,v_j^q\}\in \mathcal{A}$, we denote the binary decision variables $y_{ij}^{pq}$ to indicate whether an edge is pointing from $v_i^p$ to $v_j^q$ ($y_{ij}^{pq}=0$) or from $v_j^q$ to $v_i^p$ ($y_{ij}^{pq}=1$), and denote the binary decision variables $z_{ij}^{pq}$ to indicate whether an edge is an exclusive type ($z_{ij}^{pq}=0$) or a following type ($z_{ij}^{pq}=1$). Using the big-M method~\cite{hillier2015introduction} with $M$ a big positive number, the MILP formulation is:
\begin{equation}
    \begin{aligned}
         \min & \mathrlap{\,\, f(\{y_{ij}^{pq},z_{ij}^{pq}\}_{\{v_i^p,v_j^q\}\in {A}},\{L_i^p,U_i^p,\tilde{L}_i^p,\tilde{U}_i^p\}_{{v_i^p\in V}})}\\
         \text{s.t. } & M y_{ij}^{pq}  + M z_{ij}^{pq} + \tilde{L}_j^q     \geq  \tilde{U}_i^p , &\,\forall (v_i^p,v_j^q)\in A\\
                      & M (1-y_{ij}^{pq})  + M z_{ij}^{pq} + \tilde{L}_i^p \geq  \tilde{U}_j^q , &\,\forall \{v_i^p,v_j^q\}\in A\\
                      & M y_{ij}^{pq} + \tilde{L}_j^q   \geq \tilde{L}_i^p , &\,\forall \{v_i^p,v_j^q\}\in A\\
                      & M (1-y_{ij}^{pq}) + \tilde{L}_i^p  \geq  \tilde{L}_j^q , &\,\forall \{v_i^p,v_j^q\}\in A\\
                      & \sum\limits_{\{v_i^p,v_j^q\}\in K} z_{ij}^{pq} \leq h_K, &\, \forall K \in \mathcal{K}(G') \\
                      & \mathrlap{0\leq \tilde{T}_i^{(1)}-T_i^{(1)}\leq \tilde{T}_i^{(2)}-T_i^{(2)} \leq\cdots \leq \tilde{T}_i^{(C_i)}-T_i^{(C_i)}}\\
                      &&  \forall i \in \{1,2,...,N\}
    \end{aligned}
\end{equation}
where $\{T_i^{(l)}\}_{l=1}^{C_i}$ are the sorted distinct expected enter/exit time in the ascending order for the $i$\textsuperscript{th} robot.
The first four constraints are ``exclusive" and ``following" constraints. The next one is for density constraints. The last one indicates the monotone increasing delay for the robots. For the objective function, we consider the average completion time, the maximum completion time, the synchronized completion time and the average interference delay, defined as follows:
\begin{equation}
    \begin{aligned}
        & t_{avg}=\frac{1}{N}\sum\limits_{i=1}^N \tilde{T}^{(C_i)}_{i}, \quad t_{max}=\max\limits_{i=1,...,N} \tilde{T}^{(C_i)}_{i}\\
        & t_{sync}=t_{avg}+\frac{1}{N}\sum\limits_{i=1}^N |\tilde{T}^{(C_i)}_i-t_{avg}|, \\
        & t_{delay}=\frac{1}{N}\sum\limits_{i=1}^{N}\frac{1}{C_i}\sum\limits_{p=1}^{C_i}(\tilde{T}^{(p)}_{i}-T^{(p)}_{i}).
    \end{aligned}
\end{equation}

For each objective function and graph, we collect top $L$-optimal assignments using a MILP solver, which will be used to train our GNN-VAE model.

\subsection{Assignment prediction using GNN-VAE}

\partitle{Graph data encoding} The GNN-VAE's input has the same number of nodes as in the coordination graph, 
where we assign directed edge for precedence edges and bidirected edges for joint action edges.
The node feature for $v_i^p$ is $(L_i^p, U_i^p, \rho_i^p)$ which are the left and right expected travel time at the $p$\textsuperscript{th} merged interfere section and the density constraint. The edge feature for $(v_i^p,v_j^q)\in A$ on the completed graph $G$ is $(L_{ij}^{pq}, U_{ij}^{pq}, w_{ij}^{pq})$ with the expected enter/exit time for the \interval{} of $v_i^p$ when considering the interference with $v_j^q$, and $w_{ij}$ indicating the joint action type. The edge feature for the skeleton graph $\tilde{G}$ is $(L_{ij}, U_{ij}, 0)$.



\partitle{GNN-VAE learning framework}
In training, the assignment graph is sent to the graph encoder with global max-pooling to derive the latent embedding, which is used to reconstruct graph assignments. 
In testing, the embedding is directly sampled from a standard normal distribution.
In the decoding process, we concatenate the embedding node-wise on the skeleton graph and conduct message propagation. Here we use graph attention layer (GATv2) proposed in~\cite{brody2021attentive} for the encoder and the decoder. The resulting fused features are then utilized to generate the assignments.



\partitle{Violation-free assignment generation}
Our GNN-VAE predicts the node ranks and edge types to generate assignments that are guaranteed by design to satisfy the acyclic and density constraints. The fused feature vector for each node $v_i$ is sent into a multi-layer perceptron (MLP) to predict the node bid $b_i>0$. The nodes' ranks are computed from the bids under the following graph operation:
\begin{equation}
    \tilde{r}_i = b_i + \sum\limits_{j\in \mathcal{A}(i)} b_j
\end{equation}
where $\mathcal{A}(i)$ denotes all the ancestors for the node $v_i$ on the coordination graph with only precedence edges $G'=(V, P)$. 
The joint action edges direction then are determined by pointing from the lower-ranked nodes to the higher-ranked nodes. A variation of hinge loss is used to ensure the learned ranks consistent with the ground truth assignments:
\begin{equation}
    \mathcal{L}_{bar}=\sum_{\{v_i,v_j\}\in A}\left[\sigma_{+}(\tilde{r}_i-\tilde{r}_j)\mathds{1}_{ij} + \sigma_{+}(\tilde{r}_j-\tilde{r}_i) \mathds{1}_{ji}\right]
\end{equation}
where $\mathds{1}_{ij}=1$ if the edge is from $v_i$ to $v_j$ in the assignment and 0 otherwise, and $\sigma_+(x)=\max(x+\gamma,0)$ with a bloating factor $\gamma>0$ for numerical stability.

To determine if an undirected edge $\{v_i, v_j\}$ is ``exclusive" or ``following", we use an MLP with input the fused node features from $v_i, v_j$ to predict the edge type with the binary cross-entropy loss:
\begin{equation}
    \mathcal{L}_{bce}=\sum\limits_{\{v_i,v_j\}\in A}\left[y_{ij}\log(\hat{p}_{ij}) + (1-y_{ij})\log(1-\hat{p}_{ij})\right]
\end{equation}
here $\hat{p}_{ij}$ is the estimated probability for the edge $\{v_i,v_j\}$ being  ``following"-type, and $y_{ij}$ is the binary ground truth label (with 1 being the ``following"-type). Upon assignment generation, for each maximal clique on the graph, we sort edges based on $p_{ij}$ and select the top-$h_k$ edges to be ``following"-type. This formulation guarantees that the assignment always adheres to the density limit.

Finally, we use a KL-divergence loss to regularize the latent space distribution to be similar to a standard normal distribution and
the final loss becomes:
\begin{equation}
    \mathcal{L} = \alpha_1\mathcal{L}_{bar} + \alpha_2\mathcal{L}_{bce} + \alpha_3\mathcal{L}_{kl}
\end{equation}
where $\alpha_1,\alpha_2,\alpha_3$ weighs the balance between loss terms.

\partitle{Remarks} 
Our method is reliable and expressive: The generated assignments can always satisfy the acyclic and density constraints. Moreover, it can produce a corresponding assignment for any directed acyclic graph (DAG) with any distribution of the following-type edges.
The density constraints are met since the top-k selection mechanism allows at most $h_k$ robots into an \interval{} at once. Therefore, we just complete our proof of this statement for the acyclic constraints below.

\begin{figure*}[!htbp]
    \centering
    \begin{subfigure}[b]{0.243\textwidth}
        \centering
        \includegraphics[width=\textwidth]{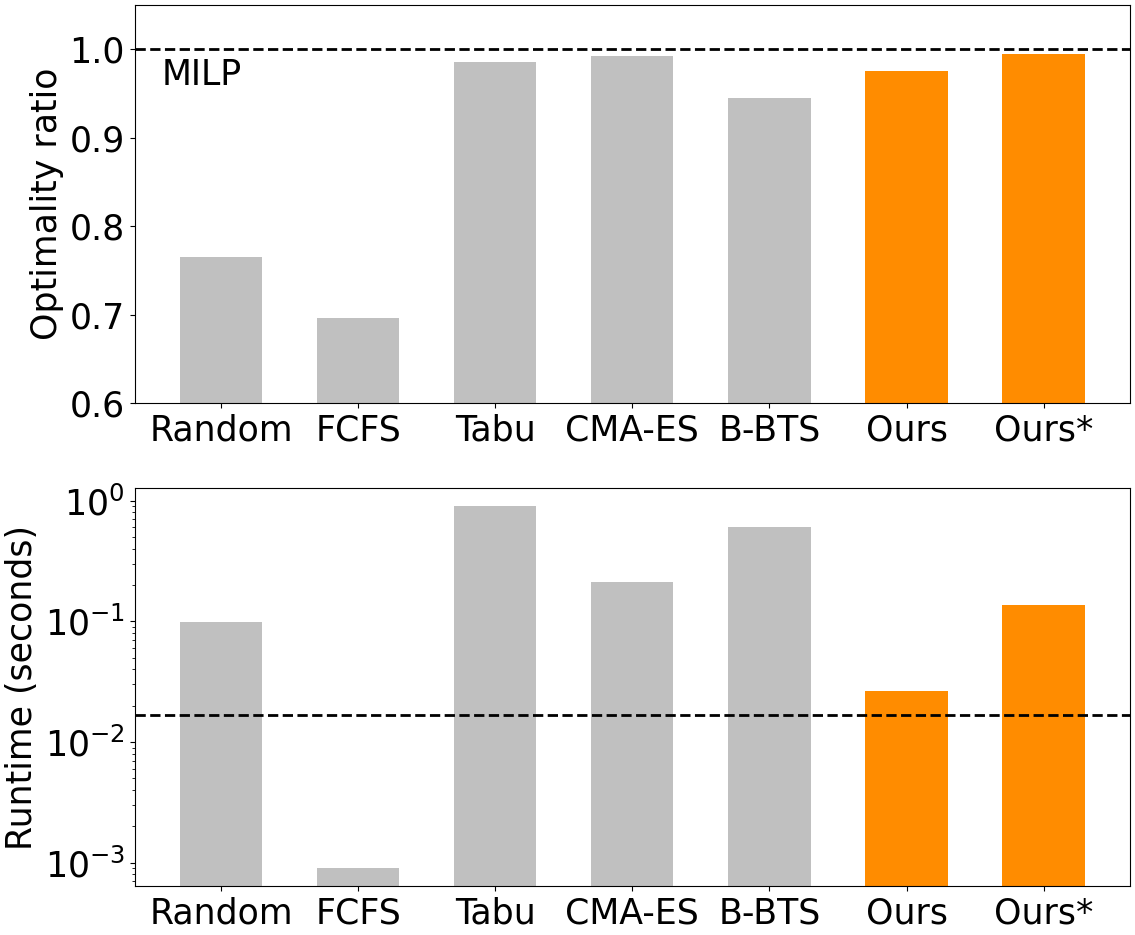}
        \caption{Average completion time}
        \label{fig:tavg}
    \end{subfigure}
    \begin{subfigure}[b]{0.243\textwidth}
        \centering
        \includegraphics[width=\textwidth]{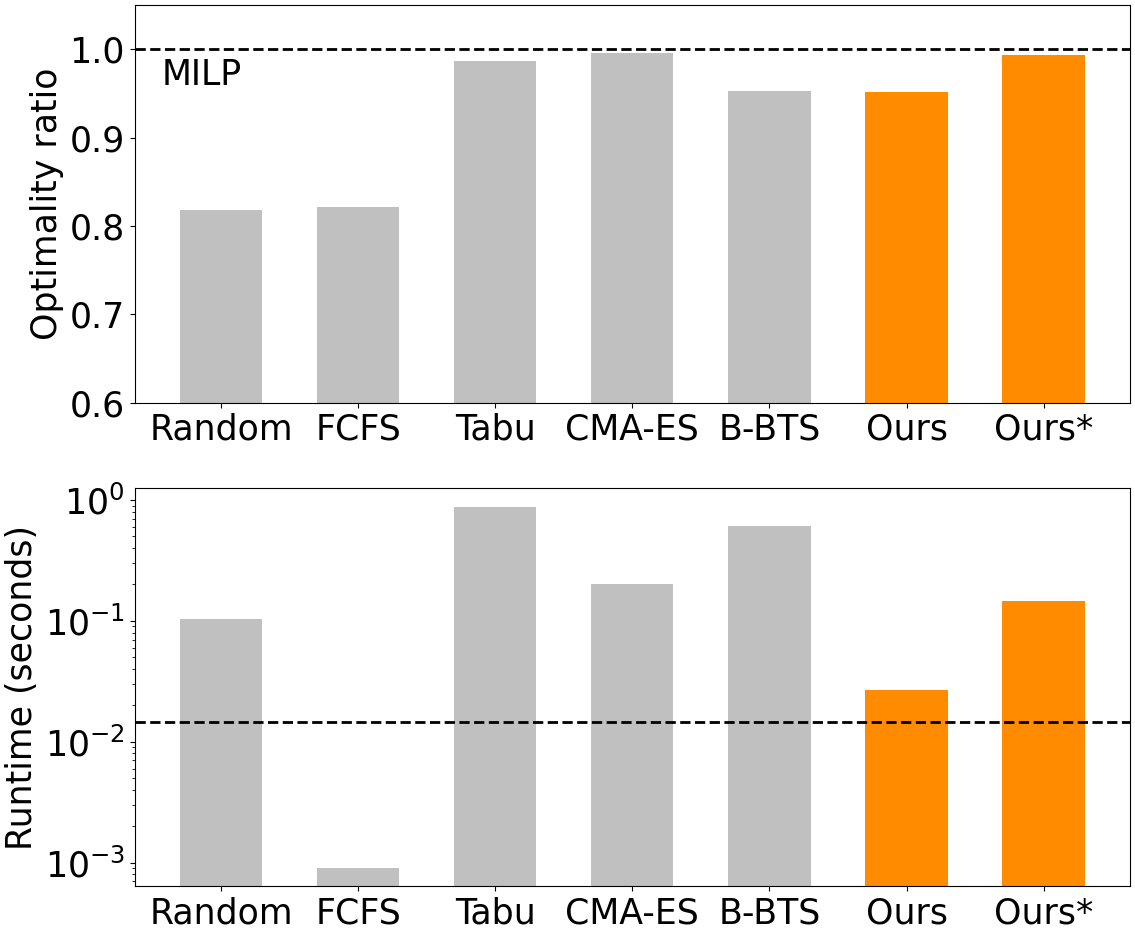}
        \caption{Maximum completion time}
        \label{fig:tmax}
    \end{subfigure}
    \begin{subfigure}[b]{0.243\textwidth}
        \centering
        \includegraphics[width=\textwidth]{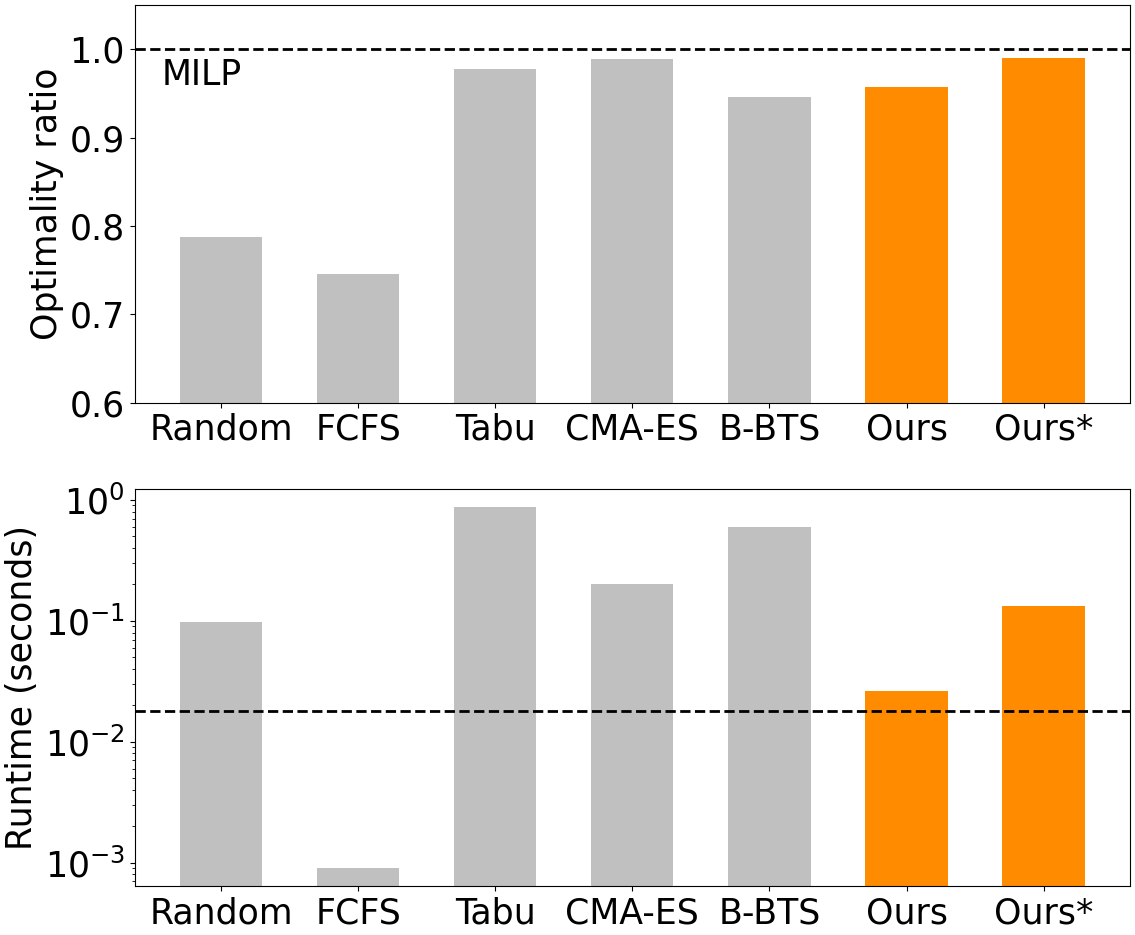}
        \caption{Synchronized completion time}
        \label{fig:tsync}
    \end{subfigure}
    \begin{subfigure}[b]{0.243\textwidth}
        \centering
        \includegraphics[width=\textwidth]{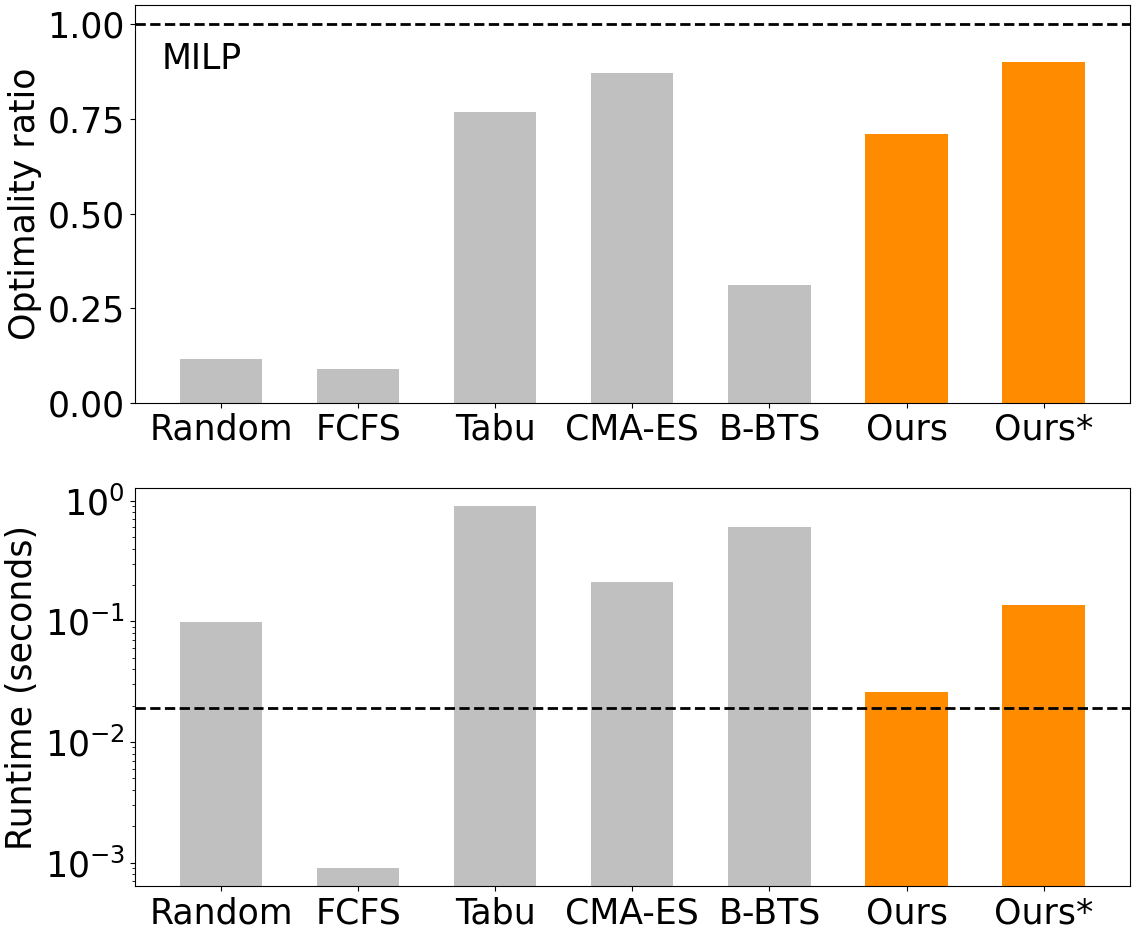}
        \caption{Average interfere delay}
        \label{fig:tdelay}
    \end{subfigure}
    \caption{Main comparisons for solution quality and computation runtime under different cost functions.}
    \label{fig:main-cmp}
\end{figure*}

\begin{proposition}
Given a mixed graph $G=(V,E,S)\in\mathcal{G}$ with disjointed directed edges $E$ and undirected edges $S$ and $G'=(V,E)\in\mathcal{G}_d$  a directed acyclic graph (DAG), denote the transformation from the mixed graph and node bids to a new directed graph as $\mathcal{T}:\mathcal{G}\times\mathcal{B}_V\to\mathcal{G}_d$.
The following properties hold: (1) $\forall b_V\in\mathcal{B}_V$,  $G_{new}=\mathcal{T}(G,r_V)$ is a DAG containing $E$. (2) $\forall E_{new}$, if $G_{new}=(V, E\cup E_{new})$ is a DAG, then $\exists b_V\in\mathcal{B}_V$ such that $\mathcal{T}(G, b_V)=G_{new}$.
\end{proposition}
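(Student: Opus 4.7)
The plan is to unpack the transformation $\mathcal{T}$ into its two essential ingredients: the scalar rank $\tilde{r}_i = b_i + \sum_{j \in \mathcal{A}(i)} b_j$ assigned to every vertex, and the orientation rule that directs each undirected edge from the lower-ranked endpoint to the higher-ranked one. Both properties then reduce to a single observation, namely that the map $b_V \mapsto \tilde{r}_V$ realizes exactly the linear extensions of the ancestor partial order of $G' = (V, E)$. With this viewpoint in mind, (1) becomes a soundness claim for the map, and (2) becomes a completeness (surjectivity) claim.

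For property (1), I would first establish strict monotonicity of $\tilde{r}$ along the precedence relation: if $u$ is a strict ancestor of $v$ in $G'$, then $\mathcal{A}(u) \cup \{u\} \subsetneq \mathcal{A}(v) \cup \{v\}$, and since every bid is strictly positive, $\tilde{r}_v - \tilde{r}_u$ equals a positive sum of bids and is therefore strictly positive. Consequently the ranks induce a total preorder that is a linear extension of $G'$ (with any deterministic tie-breaking rule resolving rank collisions between incomparable vertices). Orienting each undirected edge from low to high rank yields a graph in which every edge, original or newly oriented, respects this total order, which forbids directed cycles. Membership of $E$ in $G_{new}$ is immediate since $\mathcal{T}$ never removes or reorients directed edges.

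For property (2), I would give a direct construction. Given $E_{new}$ with $G_{new} = (V, E \cup E_{new})$ acyclic, fix any topological ordering $v_{\pi(1)}, \ldots, v_{\pi(n)}$ of $G_{new}$; because $E \subseteq E \cup E_{new}$, this ordering is automatically a linear extension of the ancestor partial order of $G'$ as well. I would then assign exponentially spaced bids $b_{v_{\pi(k)}} = 2^k$. Since every ancestor of $v_{\pi(k)}$ in $G'$ appears earlier in $\pi$, its bid contributes at most $2^{k-1} + 2^{k-2} + \cdots + 2 < 2^k$ to the rank, and a short calculation then gives $\tilde{r}_{v_{\pi(1)}} < \tilde{r}_{v_{\pi(2)}} < \cdots < \tilde{r}_{v_{\pi(n)}}$. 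The rank-induced order thus coincides with $\pi$, so the orientation that $\mathcal{T}$ assigns to each undirected edge matches the orientation prescribed by $E_{new}$, yielding $\mathcal{T}(G, b_V) = G_{new}$.

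The main subtlety I expect is rank ties: two vertices that are incomparable in $G'$ may receive equal ranks for special bid choices, leaving the orientation of an undirected edge between them ambiguous. This does not endanger (1), because any consistent tie-breaking rule keeps the rank order total and hence the induced graph acyclic, and it does not arise in (2), because distinct subset sums of powers of two are distinct, so the exponential bids produce strictly separated ranks. Beyond this, the rest is essentially bookkeeping; the one point worth care is that $\mathcal{A}(\cdot)$ always refers to ancestors in $G'$ rather than in $G_{new}$, a distinction that must be used consistently in both directions of the argument.
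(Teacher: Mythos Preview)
Your argument is correct and follows the same overall strategy as the paper: for (1) both you and the paper show that $\tilde r$ is strictly increasing along every precedence edge (using $\mathcal{A}(u)\cup\{u\}\subseteq\mathcal{A}(v)$ and positivity of the bids) and conclude acyclicity from the existence of a rank function; for (2) both constructions proceed along a topological ordering of $G_{new}$. Where you diverge is in the actual bid assignment. The paper sets bids recursively, taking $b_v=\max\bigl\{0,\ \max_{j}\tilde r_j-\sum_{k}b_k\bigr\}+\epsilon$ so that each new rank just clears the largest rank already seen among predecessors; you instead use the closed form $b_{v_{\pi(k)}}=2^{k}$ and the geometric-series bound $\sum_{j<k}2^{j}<2^{k}$ to force $\tilde r_{v_{\pi(k)}}\in[2^{k},2^{k+1}-2]$, which immediately gives the strict ordering. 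Your construction is a bit cleaner---no recursion, no reference to previously computed ranks---and it automatically avoids rank ties (distinct subset sums of distinct powers of two are distinct), a point the paper does not discuss. Conversely, the paper's recursive formula keeps the bids bounded independently of $|V|$, which is irrelevant for the proposition but perhaps closer in spirit to what the network actually outputs. Either construction proves (2) with equal force.
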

\begin{proof}
(1) By~\cite{cormen2022introduction} (Section 22.4), the partial order on the set of nodes $V$ induces a DAG. 
Since the node ranks $\tilde{r}_V$ derived from the node bids $b_V$ is a valid partial order, we know the $G_{new}$ is DAG, and it remains to show that the newly induced graph contains edges in $E$, i.e., $\forall (u, v)\in E, \tilde{r}_u < \tilde{r}_v$. Since $u$ and all its ancestors are also the ancestors of $v$, and the node ranks are all positive, $\tilde{r}_v = b_v + \sum\limits_{i \in \mathcal{A}(v)} b_i  \geq b_v + b_u + \sum\limits_{j \in \mathcal{A}(u)} b_j \geq b_v + \tilde{r}_u > \tilde{r}_u$. Thus, $G_{new}=\mathcal{T}(G, b_V)$ is a DAG that contains $E$.

(2) We prove it by construction. 
Any DAG has at least one topological ordering, where, for any edge $(u,v)$ on the DAG, $u$ appears before $v$. We assign the node bids following the topological order for $G_{new}$ so that the ancestors' bids and ranks are available when determining the current node's bid. Denote the current node as $v$ and then $\forall(u, v)\in E$, any positive $b_v$ will ensure $\tilde{r}_v>\tilde{r}_u$ (proved above). To ensure $\forall (w, v)\in E_{new}$, $\tilde{r}_v>\tilde{r}_w$, we can assign $b_v=\max\{0, \max\limits_{j\in A_v}\tilde{r}_j - \sum\limits_{k\in A_v}b_k \} + \epsilon$ with $\epsilon>0$ and $A_v$ denotes the set of all the ancestors for the current node. Thus, we can get $\tilde{r}_v = b_v + \sum\limits_{k\in A_v}b_k > \max\limits_{j\in A_v}\tilde{r}_j + \epsilon> \tilde{r}_w$. We continue this process to get all the node bids $b_V$, which, by construction, generates $G_{new}=(V, E\cup E_{new})$.
\end{proof}

\section{EXPERIMENTS}
\label{sec:exp}

\partitle{Implementation details}  We randomly generate 10000 coordination problems with 2 to 8 robots and up to 14 interfering sections in each case. For each cost function, we use the Gurobi MILP solver~\cite{gurobi} to generate the top-10 optimal assignments and form the training and validation datasets. In our GNN-VAE learning framework, the encoder and decoder are GATv2 layers~\cite{brody2021attentive}, and the node/edge prediction heads are MLPs. Both GATv2 and MLP are implemented with 4 hidden layers, 256 units in each layer, and a ReLU activation is used for the intermediate layers. The learning pipeline is implemented in Pytorch Geometric~\cite{fey2019fast,paszke2019pytorch}. The training is conducted with an ADAM~\cite{kingma2014adam} optimizer, a learning rate $3\times 10^{-4}$ and a batch size 128. The coefficients are $\alpha_1=1.0, \alpha_2=1.0, \alpha_3=0.01, \gamma=0.1$. The training takes 1$\sim$ 2 hours on an NVidia A100 GPU. During evaluation, for each graph, we sample 100 assignments from the GNN-VAE decoder and select the one with the lowest cost.

\partitle{Baselines} We consider: (1) \bsl{Random} randomly generate node ranks and joint action edge types to form a valid assignment (2) \bsl{FCFS} first-come-first-serve to assign the joint action direction (if A has an earlier entering time than B, an edge points from A to B) and randomly generate edge types (3) \bsl{Tabu} a local search algorithm based on Tabu Search~\cite{glover1998tabu}, initialized with the solution from FCFS (4) \bsl{CMA-ES} Covariance Matrix Adaptation Evolution Strategy~\cite{hansen2003reducing}, (5) \bsl{B-BTS} budgeted backtrack search that finds the first 1000 feasible candidates and select the one with the lowest cost, and (6) \bsl{MILP} mixed-integer linear program (treated as the oracle since it generates optimal solutions).

\partitle{Metrics} (1) \bsl{Optimality ratio} the ratio of the oracle assignment cost to the predicted assignment cost, a number between (0,1] to measure the assignment quality (the closer to 1, the better quality of the assignment) (2) \bsl{Computation runtime} the average runtime to solve a problem.

\subsection{Main results on small-scale problems}
We train our GNN-VAE on four datasets with varied cost functions and evaluate on the validation set. As shown in Fig.~\ref{fig:main-cmp}, regarding the optimality ratio, \textbf{Ours} outperforms \textbf{Random}, \textbf{FCFS} and \textbf{B-BTS}, achieving a comparable performance to strong baselines such as \textbf{Tabu} and \textbf{CMA-ES} while being one magnitude faster than both approaches in the computation runtime. With a few CMA-ES refinement steps conducted based on our predicted assignment solution (denoted as \textbf{Ours*}), we achieve the closest-to-oracle (MILP) solution quality with a slight increase in the computation time. Our method demonstrates a consistent advantage across varied objective functions, with the most significant improvement over baselines observed on the ``Average interfere delay" cost. This result is intuitive, as the delay metric captures the absolute difference in robot progress at each interference section, and is therefore not affected by the total length of the progress. This shows our GNN-VAE can effectively learn to capture the optimal solution distribution and achieves a better trade-off between the solution quality and the inference runtime compared to other approaches.

\begin{figure}[!htbp]
    \centering
    \includegraphics[width=0.4\textwidth]{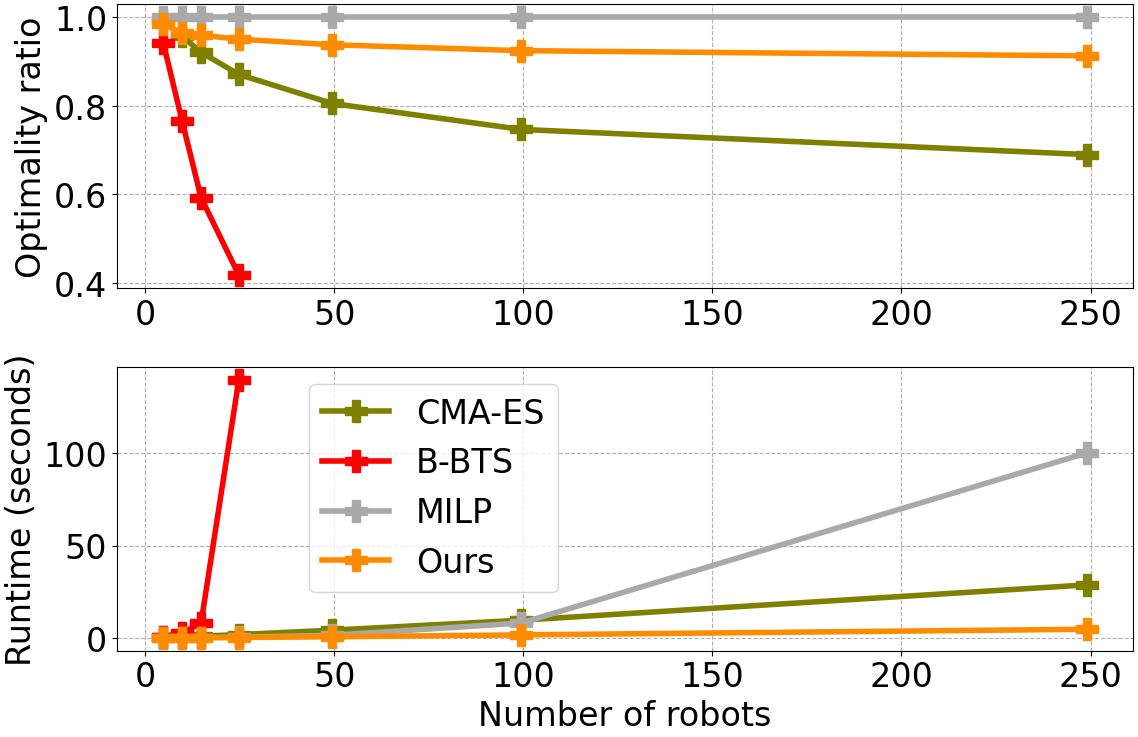}
    \caption{Performance over larger graphs.}
    \label{fig:scalability}
\end{figure}

\begin{figure}[!htbp]
    \centering
    \begin{subfigure}[b]{0.235\textwidth}
        \centering
        \includegraphics[width=\textwidth]{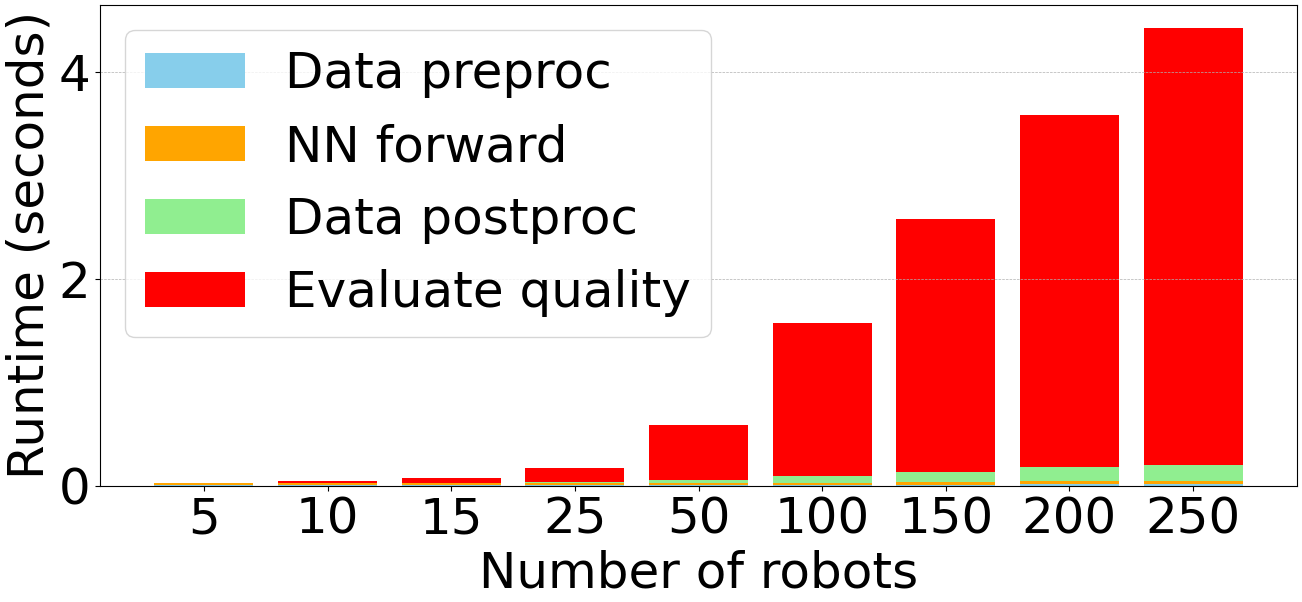}
        \caption{Absolute runtime (seconds)}
        \label{fig:runtime-sub1}
    \end{subfigure}
    \begin{subfigure}[b]{0.235\textwidth}
        \centering
        \includegraphics[width=\textwidth]{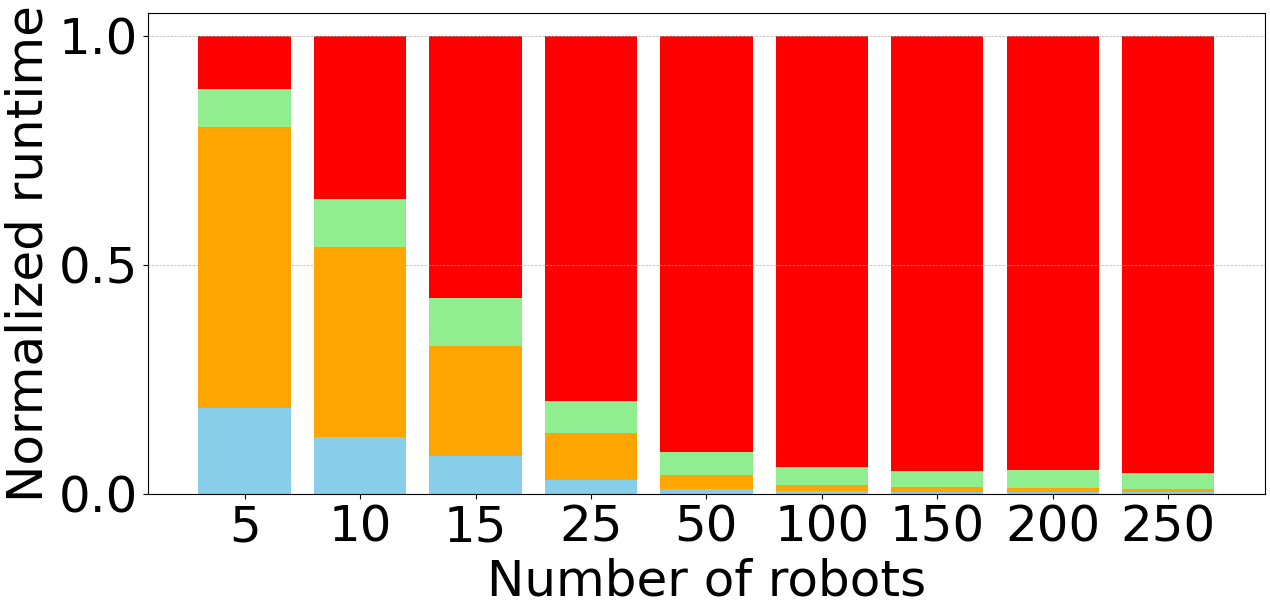}
        \caption{Normalized runtime}
        \label{fig:runtime-sub2}
    \end{subfigure}
    \caption{Runtime breakdown for GNN-VAE at inference stage.}
    \label{fig:runtime}
\end{figure}

\subsection{Generalizability and scalability to large-scale problems}

The main advantages of our GNN-VAE are that it can generalize well to larger graphs without retraining and that it can scale better than other search-based or optimization-based methods. We generate large-scale problems by (1) creating the coordination subgraphs following the procedure as creating small-scale problems and (2) randomly stitching subgraphs together by adding more interfering relations over vertices from different subgraphs. The original problem involves on average 5 robots, whereas the average numbers of robots on the new generated graphs range from 10 to 250. We select the GNN-VAE model pre-trained on the ``Average completion time" dataset and directly compare it with strong baselines \textbf{B-BTS}, \textbf{MILP} and \textbf{CMA-ES}\footnote{We did not compare with \textbf{TABU} because it is too time-consuming on the larger graphs, and we did not compare with \textbf{FCFS} or \textbf{Random} because they cannot produce quality solutions.} on the newly generated varied-size large-scale problems. In this stage, we do not conduct further CMA-ES refinement due to the time limit. As shown in Fig.~\ref{fig:scalability}, our approach can generate close-to-oracle assignments with the optimality ratio consistently over 0.9 while the optimality ratio curves for \textbf{B-BTS} and \textbf{CMA-ES} drop quickly as the number of robots is more than 20. This shows the great generalizability of our approach. Regarding the algorithm runtime, our approach can be 10 to 20 times faster than the baselines, and we can solve the coordination problem with 250 robots in less than 5 seconds on average. Fig.~\ref{fig:runtime} shows a runtime breakdown analysis where it reveals that the bottleneck is not from data processing or the neural network operations, but from measuring the assignment quality, which can be computed in a parallel fashion as they do not depend on each other. We believe this could further improve our runtime performance.

\subsection{Test on out-of-distribution data in simulation} 
We randomly generate disk-shaped and rectangular obstacles in a 2d environment and use a search-based path planner~\cite{likhachev2010search} to generate reference paths for the robots. Next, we create the coordination graph based on these reference paths and use our GNN-VAE to generate an assignment. Then we compute the updated travel time for robots at interfering sections. At every simulation step, if the time is before the updated travel time, the robot will wait on the reference path; otherwise, the robot will track the reference path. We conduct the simulation in PyBullet environment~\cite{coumans2016pybullet}. The screenshot for the simulation and the cost ratio are shown in Fig.~\ref{fig:demo}. We can see that our model pretrained on the small-scale synthetic graph dataset can generalize to out-of-distribution scenarios, providing close-to-oracle quality schedules for up-to-eight robots, and still better than \textbf{Random} for $<$10 robots.

\begin{figure}[!htbp]
    \centering
    \begin{subfigure}[b]{0.21\textwidth}
        \centering
        \includegraphics[width=\textwidth]{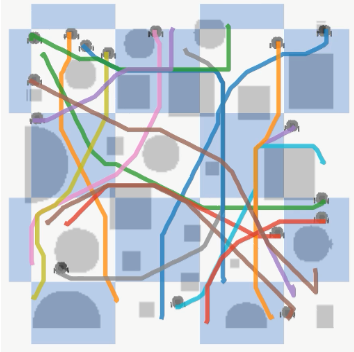}
        \caption{Simulation screenshot.}
        \label{fig:sub1-sim}
    \end{subfigure}
    \begin{subfigure}[b]{0.25\textwidth}
        \centering
        \includegraphics[width=\textwidth]{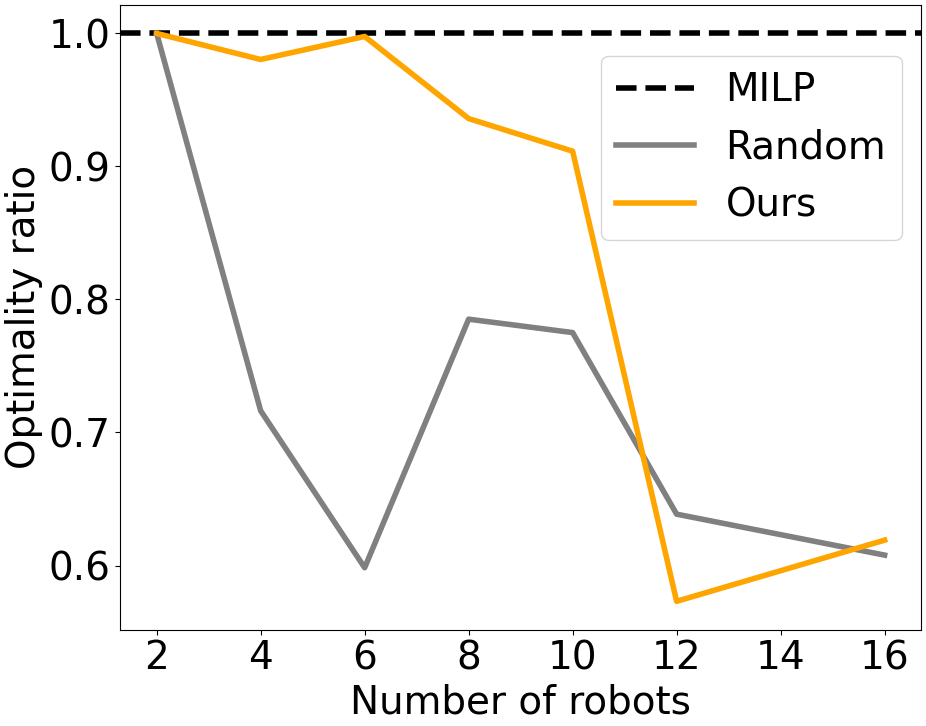}
        \caption{Cost ratio curve under varied number of robots in simulation.}
        \label{fig:sub2-cost}
    \end{subfigure}
    \caption{Out-of-distribution test in simulation environments.}
    \label{fig:demo}
\end{figure}

\begin{figure}[!htbp]
    \centering
    \includegraphics[width=0.38\textwidth]{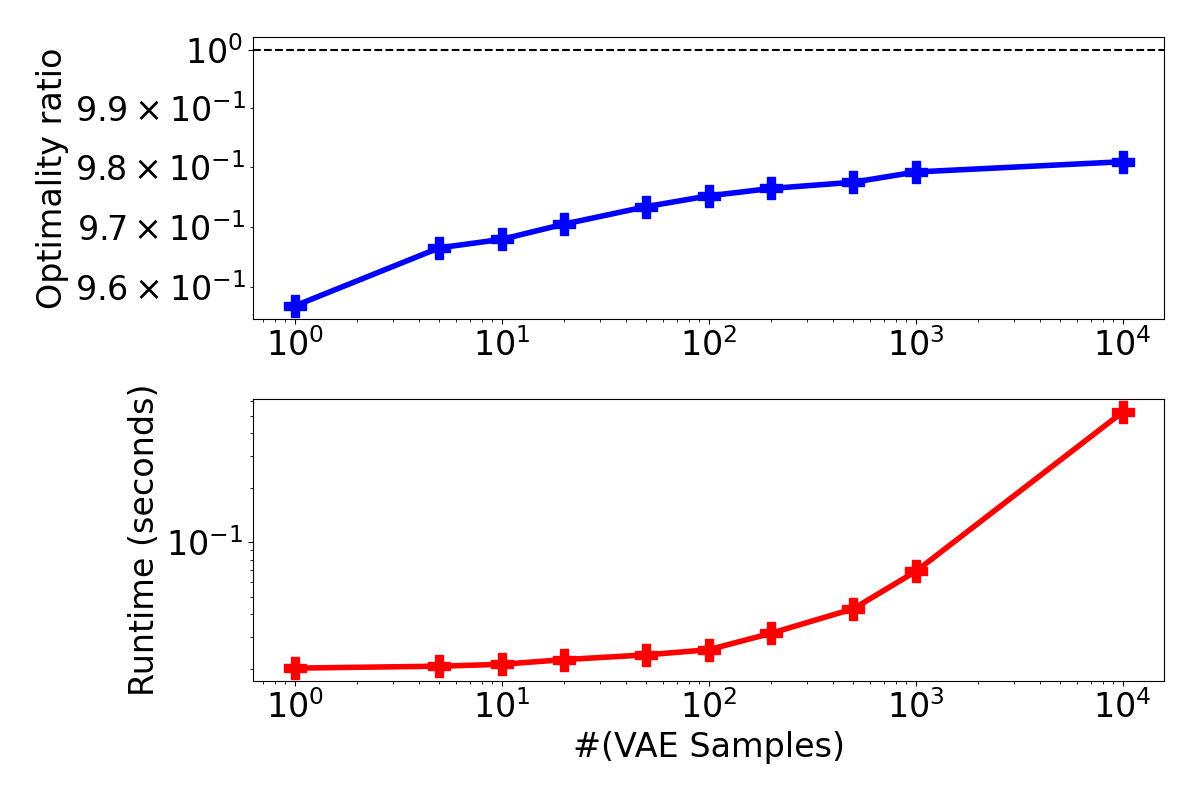}
    \caption{Ablation study on the number of GNN-VAE samples.}
    \label{fig:ablation-on-samples}
\end{figure}
\subsection{Ablation studies}
At the testing phase for the ``Average completion time" dataset, we sample various numbers of samples per graph and evaluate the performance. As shown in Fig.~\ref{fig:ablation-on-samples}, with one sample used, the optimality ratio is 0.96, and as the number of samples increases, the optimality ratio improves and finally converges to 0.98 at the cost of increasing runtime. This shows the advantage of using VAE for assignment prediction, as we can pick the one with the highest performance from multiple candidates. To balance the quality and the runtime, we generate 100 samples per graph in our experiments.

\section{CONCLUSIONS}
\label{sec:conclu}

We propose a Graph Neural Network Variational Autoencoder (GNN-VAE) framework to generate high-quality solutions for a multi-agent coordination problem. Treating coordination as a graph optimization problem, we design GNN-VAE to learn assignments in a semi-supervised manner from the optimal solutions. Our GNN-VAE has been proven to generate feasible solutions that satisfy the acyclic and density constraints inherent in coordination problems. Trained in small-scale problems, our method shows great generalizability and scalability in large-scale problems, generating near-optimal solutions 20 times faster than the oracle and achieving better quality-efficiency trade-offs than other baselines. However, some limitations remain: our approach relies on ground truth data and thus cannot adapt to flexible cost functions after training. Besides, we assume a fully observable environment without uncontrollable agents (e.g., pedestrians). We aim to address these in future work.

\addtolength{\textheight}{-6cm}

\bibliographystyle{plain}
\bibliography{z7_reference}
\end{document}